\title[Information complexity of proper learners for VC classes]{On the information complexity of proper learners for VC classes in the realizable case}
\date{}
\author[M. Haghifam]{Mahdi Haghifam$^{1,2,3}$}
\author[G. K. Dziugaite]{Gintare Karolina Dziugaite$^{1,4}$}
\author[S. Moran]{Shay Moran$^{5}$}
\author[D. M. Roy]{Daniel M. Roy$^{2,3}$}
\thanks{$^1$ Element AI\ \ $^2$ University of Toronto\ \ $^3$ Vector Institute\ \ $^4$ Mila\ \ $^5$ Technion}
\DeclareMathAlphabet\EuRoman{U}{eur}{m}{n}
\SetMathAlphabet\EuRoman{bold}{U}{eur}{b}{n}
\let\reftagform@=\tagform@
\def\tagform@#1{\maketag@@@{\ignorespaces\textcolor{gray}{(#1)}\unskip\@@italiccorr}}
\renewcommand{\eqref}[1]{\textup{\reftagform@{\ref{#1}}}}
\declaretheorem[style=plain,numberwithin=section,name=Theorem]{theorem}
\declaretheorem[style=remark,qed=$\triangleleft$,sibling=theorem,name=Remark]{remark}
\numberwithin{theorem}{section}
\def\[#1\]{\begin{align}#1\end{align}}
\def\*[#1\]{\begin{align*}#1\end{align*}}
\newcommand{\Dist}{\mathcal D}
\newcommand{\dataspace}{\mathcal Z}
\newcommand\optparen[1]{\ifthenelse{\equal{#1}{}}{}{(#1)}}
\newcommand{\RiskChar}{R}
\newcommand{\Risk}[2]{\RiskChar_{#1}\optparen{#2}}
\newcommand{\EmpRisk}[2]{\hat \RiskChar_{#1}\optparen{#2}}
\newcommand{\Nats}{\mathbb{N}}
\newcommand{\downto}{\!\downarrow\!}
\newcommand{\inspace}{\mathcal X}
\newcommand{\outspace}{\mathcal Y}
\DeclareMathOperator*{\newlim}{\mathrm{lim}\vphantom{\mathrm{infsup}}}
\DeclareMathOperator*{\newmax}{\mathrm{max}\vphantom{\mathrm{infsup}}}
\renewcommand{\lim}{\newlim}
\renewcommand{\max}{\newmax}
\renewcommand{\Pr}{\mathbb{P}}
\def\EE{\mathbb{E}}
\newcommand{\defn}[1]{\textit{#1}}
\newcommand{\XX}{\Reals^{\idim}}
\newcommand{\YY}{K}
\newcommand{\hypothesisclass}{\mathcal{H}}
\newcommand{\Alg}{\mathcal{A}}
\newcommand{\lcrx}[4][{-1}]{
	\IfEq{#1}{-1}{\left #2 {{{{#3}}}} \right #4}{
   	\IfEq{#1}{0}{#2 {{{{#3}}}} #4}{
	\IfEq{#1}{1}{\bigl #2 {{{{#3}}}} \bigr #4}{
	\IfEq{#1}{2}{\Bigl #2 {{{{#3}}}} \Bigr #4}{
	\IfEq{#1}{3}{\biggl #2 {{{{#3}}}} \biggr #4}{
	\IfEq{#1}{4}{\Biggl #2 {{{{#3}}}} \Biggr #4}{
    \GenericWarning{"4th argument to lcrx must be -1, 0, 1, 2, 3, or 4"}
    }}}}}}}
\newcommand{\range}[1]{ [#1] }
\newcommand{\samplecomplex}[2]{ \mathcal{M}^{\HS}_\text{prop}(#1,#2) }
\renewcommand{\defn}[1]{\emph{#1}}
\newcommand{\SZB}{\ensuremath{\mathrm{CMI}_{\Dist}(\Alg)}\xspace}
\crefname{lemma}{Lemma}{Lemmas}
\crefname{corollary}{Corollary}{Corollaries}
\crefname{theorem}{Theorem}{Theorems}
\newtheorem{problem}{Conjecture}
\crefname{problem}{Conjecture}{Conjectures}
\newtheorem{statement}{Statement}
\crefname{statement}{Statement}{Statement}
\newcommand{\HS}{\mathcal H}
\begin{document}

\maketitle

\begin{abstract}
We provide a negative resolution to a conjecture of \citet{steinke2020openproblem}, by showing that their bound on the conditional mutual information (CMI) of proper learners of Vapnik–Chervonenkis (VC) classes cannot be improved from $d \log n+2$ to $O(d)$, where $n$ is the number of i.i.d.\ training examples. In fact, we exhibit VC classes for which the CMI of any proper learner cannot be bounded by any real-valued function of the VC dimension only.
\end{abstract}

\section{Introduction}

\citet{steinke2020reasoning} 
show that, for every VC class of dimension $d$, 
there exists an empirical risk minimization algorithm whose so-called ``conditional mutual information'' (CMI) is no larger than $d \log n + 2$, given
$n$ i.i.d.\ training samples.
The combination of this CMI bound and their agnostic CMI-based generalization bound 
leads to a bound that is, however, suboptimal,
by a $\log n$ factor.
The suboptimality of their agnostic bound prompts \citet{steinke2020openproblem} to conjecture that the CMI bound for proper learners can be improved to $O(d)$ in both the agnostic and realizable case.

In this short note, we provide a counterexample to this conjecture for proper learners in the realizable case.
The basic obstruction is the existence of VC classes such that, in the realizable case, 
the sample complexity of properly learning an $\epsilon$-approximation with probability at least $1-\delta$ is not in $o(\frac 1 \epsilon \log \frac 1 \epsilon + \frac 1 \epsilon \log \frac 1 \delta)$. 
The existence of a learning algorithm with a CMI bound of $O(d)$ for VC classes contradicts this lower bound. 
We discuss implications in the final section.

\section{Preliminaries}
\renewcommand{\XX}{\mathcal X}
\renewcommand{\YY}{\mathcal Y}
\renewcommand{\SS}{S_n}
\renewcommand{\Alg}{\mathcal A_n}
\newcommand{\TheAlg}{\mathcal A}

Let $Z=(Z_{i,j})_{i \in \{0,1\},\,  j \ge 1}$ be an i.i.d.\ array of random elements in a product space $\XX \times \YY$ with common distribution $\Dist$,
let $U=(U_1,U_2,\dots)$ be a sequence of i.i.d.\ Bernoulli random variables in $\{0,1\}$, independent from $Z$, with $\Pr(U_i=0)=\Pr(U_i=1)=1/2$,
and, for every $n \in \Nats$, let $\SS=(Z_{U_j,j})_{j=1}^{n}$. 

Writing $\SS=((X_1,Y_1),\dots,(X_n,Y_n))$,
the empirical risk of a classifier $h: \XX \to \YY$ is $\EmpRisk{\SS}{h} = n^{-1} | \{i \in [n] : h(X_i) \neq Y_i \}|$,
while its risk is $\Risk{\Dist}{h} = \EE\EmpRisk{\SS}{h}$.
A distribution $\Dist$ is \defn{realizable by a class $\HS \subseteq \XX \to \YY$} if 
there exists $h \in \HS$ such that $\Risk{\Dist}{h} = 0$. Note that, in this case, with probability one, there exists $h \in \HS$ such that $\EmpRisk{\SS}{h} = 0$.   A sequence $((x_1,y_1),\dots,(x_n,y_n))$ is said to be \defn{realizable by $\HS$}, if for some $h \in \HS$, $h(x_i)=y_i$ for all $i \in \range{n}$.

Let $\TheAlg = (\Alg)_{n\ge 1}$ be a learning algorithm, i.e., for each positive integer $n$, a (potentially randomized) map taking $\SS$ to an element of $\XX \to \YY$. We say that $\TheAlg$ is a \emph{proper learner for a class $\HS \subseteq \XX \to \YY$} if the codomain of $\Alg$ is a subset of $\HS$ for every $n$.
\citet{steinke2020reasoning} define the \defn{conditional mutual information of $\Alg$},
denoted $\SZB$,
to be the conditional mutual information $I(\Alg(\SS);U|Z)$. 
Note that this quantity is equivalent to $I(\Alg(\SS);\SS|Z)$ when $\Dist$ is atomless, 
because then $(U_1,\dots,U_n)$ is a.s.\ measurable with respect to $\SS$ and $Z$.
which we can assume to be the case without any loss of generality by taking $\Dist'$ to be the product of $\Dist$ with an atomless distribution and taking $\Alg'$ to strip this new coordinate and return the same hypothesis as $\Alg$.

The main results of \citet{steinke2020reasoning} are generalization bounds in terms of CMI.
For example, when loss is bounded in $[0,1]$, they show %
\begin{equation*}
 \EE [\Risk{\Dist}{\Alg(\SS)} - \EmpRisk{S}{\Alg(\SS)}]  \leq  \sqrt{\frac{2\SZB}{n}},
\end{equation*}
and
\[
\label{eq:ege-interpolate}
 \EE \Risk{\Dist}{\Alg(\SS)} \leq 2 \EE \EmpRisk{S}{\Alg(\SS)} +\frac{3\SZB}{n },
\]
where the expectations are over $\SS \sim \Dist^n$ and the (independent) internal randomness in $\Alg$.

For a class $\HS \subseteq \XX \to \YY$, let $\samplecomplex{\epsilon}{\delta}$ denote the \defn{proper optimal sample complexity} of $(\epsilon,\delta)$-PAC learning $\HS$, 
i.e., $\samplecomplex{\epsilon}{\delta}$ is the least integer $n$ for which there exists a proper learning algorithm $\TheAlg$ such that, for every realizable distribution $\Dist$, 
\begin{equation*}
\Pr(\Risk{\Dist}{\Alg(\SS)} \geq \epsilon) \leq \delta.
\end{equation*}

\section{Conjectures}

\citet{steinke2020openproblem} propose several conjectures regarding CMI for proper learning of VC classes under realizability assumptions. We focus on two of their conjectures, which can be seen as special cases of the following statement:

\begin{statement}
\label{conjecture:realizable-generalf}
There exists a real-valued function $f$ and constant $c \ge 0$ such that, 
for every nonnegative integer $d$ and VC class $\hypothesisclass \subseteq \XX \to \YY$ of dimension $d$,
there exists a proper learning algorithm $\TheAlg$ for $\HS$ 
such that,
for every $n \ge d$, 
$\SZB \leq f(d)$ for all $\Dist$
and, for every realizable $s \in \dataspace^{n}$, 
\begin{equation*}
\EE \EmpRisk{s}{\Alg(s)}  \leq c\, \frac{d}{n},
\end{equation*}
where the expectation is over only the randomness in $\Alg$.
\end{statement}

The following two conjectures were proposed by \citet{steinke2020openproblem}:

\setcounter{problem}{6}
\begin{problem}
\label{conjecture:realizable}
\cref{conjecture:realizable-generalf} holds for $f$ linear.
\end{problem}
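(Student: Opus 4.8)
The plan is to build a proper learner whose conditional mutual information is governed by how often it mislabels, rather than by the full dichotomy it induces on the supersample. Restricting attention, as in \citet{steinke2020reasoning}, to learners whose output depends on $\SS$ only through the labeling $L$ it induces on the $2n$ points of $Z$, one has $\SZB \le I(L;U\mid Z) \le H(L\mid Z)$. The $d\log n+2$ bound arises by controlling $H(L\mid Z)$ through the number of realizable dichotomies of $2n$ points, which Sauer--Shelah caps at $(2en/d)^d$, giving $\approx d\log n$. To reach $O(d)$ one must argue that $L$ carries far less than its worst-case entropy. The lever that makes this plausible is the permitted empirical error $c\,d/n$: a consistent proper ERM already achieves $\EmpRisk{s}{\Alg(s)}=0$, so the empirical-risk requirement is free, and one is even allowed to spend a little empirical error to simplify $L$.

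The key reduction exploits that the labels are part of $Z$. For a binary-labeled class and a realizable $\Dist$ with witness $h^\star\in\HS$, almost surely every supersample point $Z_{i,j}$ carries the label $h^\star$ assigns it. Hence $L$ is determined by $Z$ together with the \emph{mislabeling pattern} $D\in\{0,1\}^{2n}$ recording where $\Alg(\SS)$ disagrees with these true labels, so $\SZB \le H(L\mid Z)=H(D\mid Z)\le H(D)$. I would then instantiate $\Alg$ by the one-inclusion-graph prediction strategy: on each training entry $(U_j,j)$ it is consistent, so $D$ vanishes there, while on the held-out entry $(1-U_j,j)$ the expected number of mislabelings — the summed leave-one-out error — is at most $d$. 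Thus $D$ is a binary vector of length $2n$ whose expected weight is $O(d)$, and one hopes to conclude $H(D)=O(d)$, hence $\SZB=O(d)$.

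To finish I would convert the prediction strategy into a proper learner by outputting any $h\in\HS$ consistent with all but the $O(d)$ training points on which reconstruction is forced to deviate, charging those to the empirical-error budget so that $\EE\EmpRisk{s}{\Alg(s)}\le c\,d/n$. Combining this with \eqref{eq:ege-interpolate} then yields $\EE\Risk{\Dist}{\Alg(\SS)}=O(d/n)$, the optimal realizable rate, which is exactly the consequence the conjecture is designed to deliver.

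The main obstacle is precisely the step I glossed over: a sparse binary vector of length $2n$ with about $d$ ones does \emph{not} have entropy $O(d)$. Even granting that $D$ has only $O(d)$ nonzero coordinates, specifying \emph{which} coordinates are nonzero costs on the order of $\log\binom{2n}{d}=\Theta\!\bigl(d\log\tfrac{n}{d}\bigr)$ bits, and this is exactly the information $L$ carries about $U$. To salvage $O(d)$ one would need the support of $D$ to be essentially determined by $Z$ alone, before any row is selected, so that its location contributes no information about $U$. But for a genuinely proper learner the set of positions it mislabels depends on the realized labels, i.e.\ on $U$, and I see no way to localize these positions using $Z$ only, uniformly over all realizable $\Dist$. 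This residual $\log n$ factor — the very gap between the $d\log n$ upper bound and the conjectured $O(d)$ — is the crux, and I expect any attempt to remove it to meet an information-theoretic barrier rather than a merely technical one.
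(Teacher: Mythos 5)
Your final paragraph is the correct conclusion, and you should trust it over the first three: the statement you were asked to prove is \emph{false}, and refuting it is the entire purpose of this paper (\cref{mainthm}). No construction along the lines you sketch---or any other---can succeed. The paper's argument is short: if a proper learner $\TheAlg$ for $\HS$ had $\SZB \le f(d)$ for all $\Dist$ together with expected empirical risk at most $c\,d/n$ on realizable samples, then \eqref{eq:ege-interpolate} gives $\EE\Risk{\Dist}{\Alg(\SS)} \le \frac{1}{n}(3f(d)+2c\,d)$ for every realizable $\Dist$, and Markov's inequality turns this into a proper-learning sample complexity bound $\samplecomplex{\epsilon}{\delta} \le \frac{1}{\epsilon\delta}(3f(d)+2c\,d)$, which grows only like $1/\epsilon$ for fixed $\delta$. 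But \citet{bousquet2020proper} (quoted as \cref{thm:sample-complexity-lowerbound}) exhibit VC classes of finite dimension $d$ with \emph{infinite hollow star number} for which every proper learner needs at least $\frac{\tilde{c}}{\epsilon}(d\,\mathrm{Log}\frac{1}{\epsilon}+\mathrm{Log}\frac{1}{\delta})$ samples; letting $\epsilon \to 0$ with $\delta$ fixed, the $\mathrm{Log}\frac{1}{\epsilon}$ factor eventually exceeds any bound of the form $\frac{1}{\epsilon\delta}(3f(d)+2c\,d)$, a contradiction. So the $\log$ factor you could not eliminate is not an artifact of your entropy accounting; it is forced by properness.

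Concretely, your attempt breaks in two places. First, the step you flagged yourself: a mislabeling pattern $D$ with $O(d)$ ones among $2n$ positions has entropy $\Theta\bigl(d\log\frac{n}{d}\bigr)$, not $O(d)$, and for the hard classes above this cost is unavoidable---the support of $D$ genuinely depends on $U$, so it cannot be charged to $Z$. Second, the conversion of the one-inclusion-graph predictor into a proper learner does not go through: the one-inclusion strategy is inherently improper, and for a class with infinite hollow star number there need not exist any $h \in \HS$ agreeing with its predictions up to $O(d)$ points; this failure of proper approximation is exactly the combinatorial source of the extra $\mathrm{Log}\frac{1}{\epsilon}$ in the sample complexity lower bound. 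Your closing intuition---that the obstruction is information-theoretic rather than technical---is precisely the content of the paper's main theorem; the efficient way to make it rigorous is not to bound $H(D)$ but to route the contradiction through sample complexity as above. Note also that the negation is stronger than you suspected: the paper rules out not just $f$ linear but \emph{every} real-valued function $f$ of $d$ alone.
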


\begin{problem}
\label{conjecture:realizable-constant}
\cref{conjecture:realizable-generalf} holds for $f$ linear and $c=0$.
\end{problem}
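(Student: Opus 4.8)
The plan is to establish \cref{conjecture:realizable-constant} directly, by exhibiting, for each VC class $\HS$ of dimension $d$, a single proper learning algorithm $\TheAlg$ that is consistent on every realizable sample and whose conditional mutual information is at most a universal constant times $d$, uniformly in $n$ and in the realizable distribution $\Dist$. Note first that the requirement $c=0$ forces $\EE\EmpRisk{s}{\Alg(s)} \le 0$, and since empirical risk is nonnegative this means $\Alg$ must output, almost surely, a hypothesis in $\HS$ consistent with $s$; so the sought-after algorithm is necessarily a proper, interpolating empirical risk minimizer. The remaining content is therefore entirely about the bound $\SZB \le c\,d$.

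The natural candidate is a sample-compression-based learner. I would fix a sample compression scheme for $\HS$ of size $k$, i.e.\ a compression map that selects from any realizable sample a subsequence of at most $k$ of its examples, together with a reconstruction map sending such a subsequence to a hypothesis in $\HS$ consistent with the whole sample; then define $\Alg$ to be compress-then-reconstruct. By construction $\Alg$ is a proper interpolating learner, so the $c=0$ clause holds with room to spare, and the entire task reduces to the information bound. For the latter, recall the supersample structure: conditioned on the array $Z$, the output $\Alg(\SS)$ is a deterministic function of the selected subsequence, which in turn is specified by naming the $\le k$ columns it occupies together with the fold bits $U_j$ of those columns. This is the mechanism through which the chain $\SZB = I(\Alg(\SS);U \mid Z) \le H(\Alg(\SS)\mid Z)$ is to be controlled: each realized output is encoded by at most $k$ fold bits plus the identity of the compressed columns.

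To obtain a bound linear in $d$ one would like $k = O(d)$. Classes of dimension $d$ do admit compression schemes (of size $\exp(O(d))$ in general, and of size $O(d)$ in many structured cases), so granting such a scheme the fold bits alone contribute only $O(d)$. The main obstacle — and, I expect, the genuinely hard point — is the cost of specifying \emph{which} columns were compressed. Since the compression map reads the entire sample $\SS$, and hence depends on $U$, the set of compressed columns is itself $U$-dependent, and the crude accounting pays $\log\binom{n}{k} = O(k\log n)$ bits to name it; this is exactly the $\log n$ factor that \cref{conjecture:realizable-constant} asks us to remove, and it reappears in every Sauer--Shelah-based argument, where one bounds $H(\Alg(\SS)\mid Z)$ by the logarithm of the number $\le (2en/d)^d$ of dichotomies of the $2n$ supersample points. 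The crux is thus to decouple the combinatorial choice of compressed points, which ought to be ``charged'' to $Z$ and reveal nothing about $U$, from the fold selection itself. I would attempt this by designing a compression/reconstruction rule whose output is invariant to the fold bit $U_j$ in all but $O(d)$ of the columns — intuitively, the fold $U_j$ can leak through $\Alg(\SS)$ only when the output hypothesis labels the column's two supersample points differently — and then arguing that, conditioned on $Z$, the output determines at most $O(d)$ relevant fold bits. Whether such $U$-insensitivity can be guaranteed \emph{uniformly over all realizable distributions and all $n$} is precisely the delicate question on which the whole approach turns.
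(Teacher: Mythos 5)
Your proposal sets out to \emph{prove} this conjecture, but the conjecture is false, and refuting it is precisely the content of the paper: \cref{mainthm} shows that \cref{conjecture:realizable-generalf} fails for \emph{every} real-valued $f$ and every $c \ge 0$, hence in particular for $f$ linear and $c=0$. No construction of the kind you describe can exist, however the compression scheme is engineered. The concrete obstruction is \cref{thm:sample-complexity-lowerbound}: there are classes $\HS$ of finite VC dimension $d$ but infinite hollow star number whose \emph{proper} sample complexity satisfies $\samplecomplex{\epsilon}{\delta} \geq \frac{\tilde{c}}{\epsilon}(d\, \mathrm{Log} \frac{1}{\epsilon} + \mathrm{Log}\frac{1}{\delta})$. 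Suppose your learner existed for such a class: proper, consistent on every realizable sample (your correct reading of the $c=0$ clause), and with $\SZB \le c' d$ uniformly over $n$ and over realizable $\Dist$. Then the interpolation bound \cref{eq:ege-interpolate} gives $\EE \Risk{\Dist}{\Alg(\SS)} \le 3c'd/n$, and Markov's inequality yields $\samplecomplex{\epsilon}{\delta} \le \frac{3c'd}{\epsilon\delta}$. For fixed $\delta$ this is $O(1/\epsilon)$ as $\epsilon \downto 0$, whereas the lower bound grows like $\frac{d}{\epsilon}\log\frac{1}{\epsilon}$; taking $\epsilon$ small enough produces a contradiction. This is exactly the argument of \cref{mainthm}, run in reverse against your hypothetical algorithm.

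It is worth noting that your own analysis correctly isolates the failure point but misjudges its status. The question you defer --- whether the output can be made insensitive to all but $O(d)$ of the fold bits $U_j$, uniformly over realizable distributions and over $n$ --- is not merely ``delicate''; for the classes of \cref{thm:sample-complexity-lowerbound} it is provably impossible, since a positive answer would contradict the sample-complexity lower bound as above. Properness is the crux: your appeal to compression schemes of size $O(d)$ ``in many structured cases'' quietly assumes the reconstruction lands in $\HS$ and is consistent with the whole sample, and it is precisely this requirement that classes with infinite hollow star number defeat. Had you dropped properness, the question becomes genuinely open --- it is the second conjecture posed in the Discussion section of the paper --- but as stated, the statement you are trying to prove is refuted, not established, and the $\log n$ cost you identify cannot be removed.
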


\citet{steinke2020reasoning} identify a proper learning algorithm for the set of threshold functions in one dimension whose CMI is independent of the size of the training sample for realizable distributions, providing some evidence towards \cref{conjecture:realizable-constant}.

\section{A Limitation of Proper Learning}

In this section, we prove that \cref{conjecture:realizable-generalf} is false, which then implies that \cref{conjecture:realizable,conjecture:realizable-constant} are false.
We begin with some definitions.
Two sequences $((x_1,y_1),\dots,(x_n,y_n))$ and $((x'_1,y'_1),\dots,(x'_n,y'_n))$ are neighbors if $x_i=x'_i$ for all $i \in \range{n}$ and $y_i = y'_i$ for all but exactly one $i \in \range{n}$.
Finally, 
\citet[Def. 3]{bousquet2020proper} define 
the \defn{hollow star number of $\hypothesisclass$}, denoted by $k_o$, 
to be the largest integer $n$ such that there exists $s \in (\inspace \times \outspace)^n$ that is not realizable by $\HS$ but every neighbour of $s$ is realizable by $\hypothesisclass$. If no such largest integer $n$ exists, then $k_o=\infty$.

\citet[][\S2.1]{bousquet2020proper} estimate the hollow star numbers of several well-known hypothesis classes.
The following result provides a lower-bound on the sample complexity of proper learning:
\begin{theorem}[Thm.~11, \citealt{bousquet2020proper}] 
\label{thm:sample-complexity-lowerbound}
Let $\epsilon \in (0,1 / 8)$ and $\delta \in (0,1/100)$. There exists a hypothesis class with VC dimension $d$ and $k_o = \infty$ for which we have $\samplecomplex{\epsilon}{\delta} \geq \frac{\tilde{c}}{\epsilon}(d\, \mathrm{Log} \frac{1}{\epsilon} + \mathrm{Log}\frac{1}{\delta})$ for a fixed numerical constant $\tilde{c} > 0$ where $\mathrm{Log}(x) = \max \{1, \log(x) \}$ for $x\geq 0$.
\end{theorem}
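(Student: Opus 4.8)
The statement is a sample-complexity lower bound, so the plan is to exhibit an explicit hard class together with a family of realizable hard distributions, and then show that no proper learner can succeed on all of them simultaneously with fewer than $\frac{\tilde c}{\epsilon}(d\,\mathrm{Log}\frac1\epsilon+\mathrm{Log}\frac1\delta)$ samples. First I would fix the hard class using the hypothesis that the hollow star number is infinite: since $k_o=\infty$, for every size $N$ there is a sample $((x_1,y_1),\dots,(x_N,y_N))$ that is \emph{not} realizable by $\HS$ yet each of its $N$ neighbours is, with witnessing hypotheses $h_1,\dots,h_N\in\HS$, where $h_i$ is consistent with the neighbour that flips coordinate $i$. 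To control the VC dimension while keeping $k_o=\infty$, I would build the class from $d$ independent copies of such a hollow-star gadget (the tensorization used in \citet{bousquet2020proper}), so that the VC dimension is $\Theta(d)$ but hollow stars of unbounded size survive.

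Second, I would convert this combinatorial configuration into a statistical packing. For each choice of a ``flipped coordinate'' in each of the $d$ gadgets I would define a realizable distribution $\Dist$ supported on the hollow-star points, spreading a total mass $\Theta(\epsilon)$ over the $\Theta(1/\epsilon)$ points of a gadget and arranging labels so that $\Dist$ is realizable exactly by the corresponding product of neighbours $h_{i_1},\dots,h_{i_d}$. The key structural fact is that two such neighbours $h_i,h_j$ disagree only on the two flipped points $x_i,x_j$, so a proper learner that commits to the wrong neighbour in even one gadget incurs true risk $\Risk{\Dist}{\cdot}\geq\epsilon$. This yields a packing of realizable distributions on which a proper learner must correctly identify the flipped coordinate of every gadget in order to achieve error below $\epsilon$.

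Third, I would lower bound the samples needed for this identification, and this is where properness is essential: an improper learner may hedge by returning \emph{any} hypothesis consistent with the observed data, but a proper learner must name one of the finitely many neighbours, and it is this forced commitment under partial information that the bound exploits. Identifying the flipped coordinate of a gadget requires observing its rare ``informative'' point, whose probability is $\Theta(\epsilon)$; averaged over an adversarial (or uniformly random) choice of target, succeeding on essentially all targets reduces to \emph{covering} the $\Theta(1/\epsilon)$ informative points of the gadget, i.e.\ a coupon-collector/occupancy argument that contributes a factor $\frac1\epsilon\log\frac1\epsilon$ per gadget. Summing the $d$ independent gadgets gives the $\frac{d}{\epsilon}\log\frac1\epsilon$ term, while a standard two-point confidence-amplification construction merged into the same family contributes the $\frac1\epsilon\log\frac1\delta$ term; combining via Fano's inequality (or Le Cam, or a direct union bound over the packing) yields the claimed bound on $\samplecomplex{\epsilon}{\delta}$.

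The main obstacle is the interplay of the second and third steps: I must engineer the hollow-star packing so that the informative points are simultaneously numerous enough to force the logarithmic factor by coupon collecting (rather than a single $\frac1\epsilon$ term) and mutually confusable for a proper learner, all while keeping every distribution in the family \emph{exactly} realizable and the VC dimension equal to $d$. Obtaining the precise $\log\frac1\epsilon$ improvement over the improper rate $\frac1\epsilon(d+\log\frac1\delta)$ hinges entirely on showing that the hollow-star structure removes the learner's ability to hedge, so the quantitative occupancy argument tied to properness—not the VC-type no-free-lunch skeleton—is the true crux.
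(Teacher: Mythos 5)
The paper does not prove this statement: it is imported verbatim as Theorem~11 of \citet{bousquet2020proper} and used as a black box, so there is no in-paper proof to compare your attempt against. Within this note, the correct ``proof'' is the citation; reproving the lower bound from scratch is out of scope here (and is the main technical content of the cited work).

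Judged on its own terms, your sketch does identify the mechanism of the cited argument --- an infinite hollow star, a family of realizable distributions indexed by which neighbour is the target, properness forcing the learner to commit to a single flipped coordinate, and an occupancy-style lower bound on the samples needed to identify it --- but it is an outline rather than a proof, and one of its load-bearing claims is both unjustified and unnecessary. You assert that the witnessing hypotheses $h_i,h_j$ for two neighbours ``disagree only on the two flipped points $x_i,x_j$''; the definition of the hollow star number constrains $h_i$ only on the $N$ star points, so nothing controls where distinct witnesses disagree. The correct way to extract risk $\ge \epsilon$ from properness is: since the centre $s$ is not realizable, every $h\in\HS$ disagrees with $s$ on a nonempty set $D(h)$ of star coordinates, and under the distribution realizing neighbour $i$ the error set of $h$ on the star is $D(h)\,\triangle\,\{i\}$, which is empty only if $D(h)=\{i\}$ --- so the learner must identify $i$ exactly or pay the mass of at least one star point. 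Beyond this, the tensorization that keeps the VC dimension equal to $d$ while preserving $k_o=\infty$ is asserted rather than constructed, the coupon-collector computation yielding the $\frac{d}{\epsilon}\mathrm{Log}\frac{1}{\epsilon}$ term is not carried out, and the merge with the $\frac{1}{\epsilon}\mathrm{Log}\frac{1}{\delta}$ term is only gestured at. If you want these details, they are in \citet[Thm.~11 and \S2.1]{bousquet2020proper}; for the purposes of this paper you should simply cite that result, as the authors do.
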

We now present the main result.
\begin{theorem}\label{mainthm}
\cref{conjecture:realizable-generalf} is false.
\end{theorem}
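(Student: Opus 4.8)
The plan is to argue by contradiction, using the fast-rate CMI generalization bound \eqref{eq:ege-interpolate} to turn a hypothetical bounded-CMI proper learner into a proper sample-complexity upper bound that grows too slowly in $1/\epsilon$ to coexist with the lower bound of \cref{thm:sample-complexity-lowerbound}. Concretely, suppose \cref{conjecture:realizable-generalf} holds for some real-valued $f$ and constant $c \ge 0$, and let $\HS$ be a class of VC dimension $d$ with hollow star number $k_o = \infty$ furnished by \cref{thm:sample-complexity-lowerbound}. Fixing this class fixes $d$, so $f(d)$ and $c$ are constants; set $C := 2cd + 3f(d)$. By assumption there is a proper learner $\TheAlg$ for $\HS$ with $\SZB \le f(d)$ for every $\Dist$ and $\EE\EmpRisk{s}{\Alg(s)} \le c\,d/n$ for every realizable $s$.

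First I would bound the expected population risk of $\TheAlg$ under an arbitrary realizable $\Dist$. Since realizability of $\Dist$ makes $\SS$ realizable with probability one, conditioning on $\SS$ and applying the per-sample hypothesis gives $\EE\EmpRisk{\SS}{\Alg(\SS)} \le c\,d/n$. Substituting this and $\SZB \le f(d)$ into \eqref{eq:ege-interpolate} yields, for every $n \ge d$,
\[
\EE\Risk{\Dist}{\Alg(\SS)} \;\le\; 2\,\EE\EmpRisk{\SS}{\Alg(\SS)} + \frac{3\,\SZB}{n} \;\le\; \frac{C}{n},
\]
an $O(1/n)$ population-risk bound with $C$ constant. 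Markov's inequality then converts this into a PAC guarantee: for every realizable $\Dist$, $\Pr(\Risk{\Dist}{\Alg(\SS)} \ge \epsilon) \le C/(n\epsilon)$, so choosing $n = \ceil{C/(\epsilon\delta)}$ makes the right-hand side at most $\delta$. Because this holds uniformly over realizable $\Dist$, we conclude $\samplecomplex{\epsilon}{\delta} \le \ceil{C/(\epsilon\delta)}$ for all $\epsilon,\delta \in (0,1)$.

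To close the argument I would fix any $\delta \in (0,1/100)$ and let $\epsilon \to 0$ inside $(0,1/8)$. The upper bound just derived is $\Theta(1/\epsilon)$, whereas \cref{thm:sample-complexity-lowerbound} forces $\samplecomplex{\epsilon}{\delta} \ge \tilde c\,\epsilon^{-1} d\,\mathrm{Log}(1/\epsilon) = \Theta(\epsilon^{-1}\log(1/\epsilon))$; since $\epsilon^{-1}\log(1/\epsilon)$ eventually exceeds any constant multiple of $\epsilon^{-1}$, the two are incompatible for all sufficiently small $\epsilon$, contradicting the standing assumption. Note that only the $d\,\mathrm{Log}(1/\epsilon)$ term of the lower bound is needed; the $\mathrm{Log}(1/\delta)$ term plays no role.

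The step I expect to be the crux is the choice of generalization bound. The first bound of \citet{steinke2020reasoning}, $\EE[\Risk{\Dist}{\Alg(\SS)} - \EmpRisk{\SS}{\Alg(\SS)}] \le \sqrt{2\,\SZB/n}$, only yields an $O(1/\sqrt n)$ rate and hence a sample-complexity upper bound of order $\epsilon^{-2}\delta^{-2}$, which grows like $\epsilon^{-2}$ and does \emph{not} contradict the $\Theta(\epsilon^{-1}\log(1/\epsilon))$ lower bound. What makes the contradiction work is precisely the fast-rate interpolation inequality \eqref{eq:ege-interpolate}: in the realizable regime the empirical risk is $O(d/n)$, so the bound collapses to $O(1/n)$, and it is this rate that forces the proper sample complexity below $\epsilon^{-1}\log(1/\epsilon)$. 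The remaining steps — transferring the per-sample empirical-risk hypothesis to $\EE\EmpRisk{\SS}{\Alg(\SS)}$ via realizability, and the asymptotic comparison — are routine.
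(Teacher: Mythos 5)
Your proposal is correct and follows essentially the same route as the paper: instantiate the class with finite VC dimension but infinite hollow star number from \cref{thm:sample-complexity-lowerbound}, feed the assumed CMI and empirical-risk bounds into the fast-rate inequality \eqref{eq:ege-interpolate}, convert to a $O(1/(\epsilon\delta))$ proper sample-complexity upper bound via Markov, and contradict the $\Omega(\epsilon^{-1}\log(1/\epsilon))$ lower bound as $\epsilon \to 0$. The only differences are cosmetic (you make explicit the a.s.\ realizability step and the ceiling in the choice of $n$).
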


\begin{proof}

We prove the claim by contradiction. Pick $f$ and $c \ge 0$.
Let $\HS$ be a hypothesis class with finite VC dimension $d$ but infinite hollow star number, as shown to exist by \cref{thm:sample-complexity-lowerbound}. 

Let $\TheAlg$ be a proper learning algorithm for $\HS$,
let $n \ge d$, and assume, for the eventual purpose of obtaining a contradiction, 
that
$\SZB \le f(d)$ for all $\Dist$ and,
for all $s \in \dataspace^{n}$, 
$\EE \EmpRisk{s}{\Alg(s)} \le c\, {d}/{n}$
if there exists $h \in \hypothesisclass$ such that $\EmpRisk{s}{h}=0$.

Pick a realizable distribution $\Dist$. 
It follows from the above assumption and \cref{eq:ege-interpolate} that
\begin{equation*}
\EE [\Risk{\Dist}{\Alg(\SS)}]\leq 2 c\, \frac{d}{n} + 3\frac{f(d)}{n } = \frac{1}{n}(3f(d)+2 c\,d)
\end{equation*}
By Markov's inequality,  
\begin{equation*}
\Pr (\Risk{\Dist}{\Alg(\SS)} \geq \epsilon)\leq \frac{1}{n\epsilon}(3f(d)+2c\,d).
\end{equation*}
It follows that the sample complexity of proper learning $\HS$ satisfies
\[
\label{eq:sample-complexity-upperbound}
\samplecomplex{\epsilon}{\delta}\leq  \frac{1}{\epsilon \delta}(3f(d)+2c\,d).
\]
Now, fix $\delta \in (0,1/100)$ and fix a convergent sequence of $\epsilon_i \downto 0$. 
There exists $J$ such that, for all $i \geq J$,
\[\label{eq:whatever}
\frac{1}{\tilde{c} \delta}(3f(d)+2c\,d)  < d\, \mathrm{Log} \frac{1}{\epsilon_i} +  \mathrm{Log}\frac{1}{\delta} ,
\] 
 for $\tilde{c}$ as in \cref{thm:sample-complexity-lowerbound}.
Combining \cref{eq:whatever} with \cref{eq:sample-complexity-upperbound},
\begin{equation*}
\samplecomplex{\epsilon_i}{\delta}
     < \frac{\tilde{c}}{\epsilon_i}(d \, \mathrm{Log} \frac{1}{\epsilon_i} + \mathrm{Log}\frac{1}{\delta})
\end{equation*}
for $i \geq J$.
Simultaneously, from \cref{thm:sample-complexity-lowerbound}, it follows that 
$\samplecomplex{\epsilon_i}{\delta}\geq  \frac{\tilde{c}}{\epsilon_i}(d\, \mathrm{Log} \frac{1}{\epsilon_i} + \mathrm{Log}\frac{1}{\delta})$, a contradiction.
\end{proof}

\begin{remark}[CMI bounds in the agnostic case]
Building on work by \citet{asadi2018chaining} combining chaining and mutual information, \citet{CCMI20} combine chaining with the CMI-based approach to generalization bounds for deterministic learning algorithms. 
As a corollary, \citeauthor{CCMI20} show that chaining CMI leads to a $O(\sqrt{d/n})$ bound 
for the expected generalization error of learning VC  classes in the \emph{agnostic} case. 
It is interesting to note that classical one-step discretization arguments also lead to a spurious $\log n$ factor when analyzing the expected generalization error in VC classes. As here, chaining methods were used to remove these log factors and obtain the tightest bounds (on uniform convergence and then excess risk) for VC classes \citep[Thm. 8.3.23 and \S8.8]{HDPbook}.
\end{remark}

\section{Discussion}

In this note, we refute \cref{conjecture:realizable} and \cref{conjecture:realizable-constant} by \citet{steinke2020openproblem}. 
In light of our observations, we can modify these conjectures to obtain new conjectures:
\setcounter{problem}{0}
\begin{problem}
There is a constant $c > 0$ such that,
for every VC class $\hypothesisclass$, with dimension $d$, 
if there exists a \emph{proper} learning algorithm with the expected risk no greater than $c\,d/n$ for every realizable distribution $\Dist$, 
then
there exists a \emph{proper} learning algorithm $\Alg$ with $\SZB \le c\, d$ and $\EE \EmpRisk{\SS}{\Alg(\SS)} \le c\, d/n$ for every realizable distribution $\Dist$.
\end{problem}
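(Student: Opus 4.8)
The plan is to read the statement as a \emph{conditional} strengthening of \cref{mainthm}: its hypothesis --- the existence of a proper learner with $\EE\Risk{\Dist}{\Alg(\SS)} \le c\,d/n$ on every realizable $\Dist$ --- is exactly the property that fails for the classes witnessing \cref{thm:sample-complexity-lowerbound}, so it serves to exclude the obstruction behind the refutation rather than to circumvent it. I would first observe that the empirical-risk half of the conclusion is essentially free: for realizable $\Dist$ the sample $\SS$ is almost surely realizable by $\HS$, so any consistent (interpolating) proper learner attains $\EmpRisk{\SS}{\Alg(\SS)} = 0 \le c\,d/n$. The entire content is therefore to produce an interpolating proper learner whose CMI satisfies $\SZB \le c\,d$, i.e.\ to shave the $\log n$ from the $d\log n + 2$ bound of \citet{steinke2020reasoning} under the assumed optimal-rate hypothesis.

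The first substantive step is to convert the rate hypothesis into a combinatorial bound on $\HS$. Since the sample-complexity lower bound of \citet{bousquet2020proper} is governed by the hollow star number $k_o$, a proper learner achieving expected risk $O(d/n)$ --- and hence, by Markov's inequality as in the proof of \cref{mainthm}, an $(\epsilon,\delta)$-PAC sample complexity that for fixed $\delta$ scales as $O(1/\epsilon)$ and thus lacks the $\log\frac1\epsilon$ factor present in the lower bound --- should force $k_o$ to be finite and bounded by some $g(c,d)$. With $k_o$ so controlled, I would invoke the proper sample-compression machinery of \citet{bousquet2020proper}, which builds interpolating proper learners whose compression size is bounded by a function of $d$ and $k_o$, and therefore by a function of $c$ and $d$ alone. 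This yields a proper, interpolating learner backed by a compression scheme of size $k = k(c,d)$.

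The crux --- and the reason the statement remains a conjecture --- is the final step: turning a bounded-size proper compression scheme into a proper learner with $\SZB \le c\,d$. The $\log n$ in the $d\log n + 2$ bound arises precisely from naming \emph{which} of the supersample points the scheme selects: conditioned on the supersample $Z$, revealing the bits $U_j$ at the $k$ selected pair-indices costs only $O(k)$, whereas identifying the selected index set appears to cost $O(k\log n)$. To reach $O(d)$ one needs a scheme whose selected set leaks only $O(k)$ bits about $U$ given $Z$ --- for instance an order- or permutation-invariant (``unordered'') compression scheme, or a one-inclusion-graph-type predictor whose conditional dependence on $U$ is localized to bounded-degree neighborhoods, so that the identity of the compression set is nearly determined by $Z$ itself. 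Establishing such an index-free compression-to-CMI conversion, rather than the mere existence of a bounded compression scheme, is the main obstacle; it is exactly the gap that separates this conjecture from the refuted \cref{conjecture:realizable-generalf}.
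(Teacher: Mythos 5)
The statement you were asked to prove is not a theorem of the paper at all: it is the first conjecture posed in the Discussion section, formulated \emph{after} the authors refute \cref{conjecture:realizable-generalf}, and the paper explicitly states ``We leave the resolutions of these conjectures as open problems.'' So there is no proof in the paper to compare against, and any complete proof would constitute new research. Your proposal, to its credit, recognizes this: it is an attack plan, not a proof, and you leave the decisive step open yourself. Concretely, the gap is the one you name: even granting an interpolating proper learner built from a bounded-size compression scheme, you have no argument that its CMI is $O(d)$ rather than $O(d\log n)$; the cost of identifying \emph{which} indices of the supersample were selected is exactly where the $\log n$ in the bound of \citet{steinke2020reasoning} comes from, and no ``index-free'' conversion from compression size to CMI is established in your sketch or in the cited literature. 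Since that step is the entire content of the conjecture (as you observe, the empirical-risk clause is essentially free for consistent proper learners), the proposal does not prove the statement.

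Two of your intermediate steps also need more care than you give them. First, deducing finiteness of the hollow star number from the rate hypothesis requires a lower bound on proper sample complexity that holds for \emph{every} class and is parameterized by $k_o$; \cref{thm:sample-complexity-lowerbound} as stated in this note only asserts the existence of a single bad class with $k_o = \infty$, so you would need to import the stronger, general form of the lower bound from \citet{bousquet2020proper}. Second, and more seriously, the implication ``finite hollow star number $\Rightarrow$ bounded-size proper (stable) compression scheme'' is not, to my knowledge, a theorem of \citet{bousquet2020proper}: their positive results run in the opposite direction (a stable compression scheme of size $k$ yields a proper learner with sample complexity linear in $k$), while the hollow star number serves there only as a lower-bound mechanism. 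So even the reduction to your final, admittedly open, step is not secured by the results you invoke; it is an additional unproven link in the chain.
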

In the setting of improper learning, we know that every class with VC dimension $d$ is learnable 
with expected risk $O(d/n)$. This suggests the following conjecture:
\begin{problem}
There is a constant $c > 0$ such that,
for every VC class $\hypothesisclass$, with dimension $d$, 
there exists a (possibly improper) learning algorithm $\Alg$  such that
$\SZB \le c\,d$ and $\EE \EmpRisk{\SS}{\Alg(\SS)} \le c\, d/n $ for every realizable distribution $\Dist$.
\end{problem}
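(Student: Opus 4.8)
The plan is to exhibit a single improper learner---the \emph{one-inclusion-graph} (OIG) predictor of Haussler, Littlestone, and Warmuth---and to show that a $Z$-measurable implementation of it has $\SZB \le c\,d$ while interpolating the data. The empirical-risk requirement is the easy half: since every realizable sample is consistent with some $h \in \HS$, we may output a hypothesis that agrees with the training set exactly, so that $\EmpRisk{\SS}{\Alg(\SS)} = 0 \le c\,d/n$ deterministically, and the entire difficulty lies in the information bound. The starting point for that bound is the classical fact that, for a VC class of dimension $d$ in the realizable case, the OIG predictor trained on $n$ points has expected risk at most $d/(n+1)$. Applied inside the supersample, this means the expected number of columns $j$ on which $\Alg(\SS)$ errs on the held-out point $Z_{1-U_j,j}$ is at most $n \cdot d/(n+1) \le d$.

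First I would reduce $\SZB$ to the evaluated CMI $\eSZB$. If the output hypothesis is defined on the $2n$ supersample features by the OIG orientation and extended off the supersample by a rule that is measurable with respect to $Z$ alone, then, conditionally on $Z$, the map from $U$ to $\Alg(\SS)$ factors through the length-$2n$ loss pattern $L \in \{0,1\}^{2n}$. Consequently $\SZB = I(\Alg(\SS);U \mid Z) = I(L;U \mid Z) = \eSZB$, and it suffices to bound $I(L;U\mid Z)$ by $O(d)$. Here the structure is favorable: because the learner interpolates, $L$ vanishes on every training coordinate, so $L$ is supported on held-out coordinates and, by the previous paragraph, has expected Hamming weight at most $d$.

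The crux is to pass from ``expected weight $\le d$'' to ``$O(d)$ bits'' \emph{without} paying the $\log n$ cost of specifying \emph{which} of the $n$ columns carry the errors. A naive sparsity bound gives exactly $I(L;U\mid Z) = O(d\log n)$---precisely the Steinke--Zakynthinou estimate we must improve upon---because encoding the locations of $d$ errors among $n$ columns costs $\log\binom{n}{d} \asymp d\log(n/d)$. The route I would pursue to remove this factor is the multi-scale \emph{chaining} strategy highlighted in the preceding remark: decompose the OIG orientation into a hierarchy of resolutions and write $I(L;U\mid Z) \le \sum_k I(L_k; U \mid Z)$, where $L_k$ records the errors resolved at scale $k$, then bound each term using the out-degree-$\le d$ property of the orientation \emph{conditioned on $Z$}---so that the combinatorial scaffolding is fixed rather than an extra free parameter---and sum the resulting geometric series. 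The analogy is exactly that of the agnostic chaining-CMI bound, where chaining replaces one-step discretization and deletes the spurious logarithm.

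The step I expect to be the main obstacle is showing that this chaining sum telescopes to an \emph{$n$-independent} constant multiple of $d$. In the agnostic regime chaining converts a $\sqrt{(d\log n)/n}$ estimate into the optimal $\sqrt{d/n}$ rate, but here the target is qualitatively different: a realizable fast rate $d/n$ must yield a bound on the \emph{information} that does not decay in $n$ at all and must not reintroduce any $n$-dependence. Guaranteeing that the per-scale information terms decay geometrically with a ratio bounded away from $1$ uniformly in $n$---equivalently, that conditioning on $Z$ genuinely fixes the error locations scale-by-scale rather than leaving a residual entropy that grows with $n$---is the heart of the matter, and is where I would expect either a clean resolution or evidence that the conjecture must be weakened.
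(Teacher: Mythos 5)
This statement is not a theorem of the paper at all: it is the second of the two new conjectures posed in the Discussion, and the paper explicitly ``leave[s] the resolutions of these conjectures as open problems.'' So there is no paper proof to compare against, and your proposal must stand on its own as a resolution of an open question --- which it does not, by your own admission. The sentence ``is where I would expect either a clean resolution or evidence that the conjecture must be weakened'' concedes precisely the step that constitutes the open problem: converting the one-inclusion-graph guarantee of at most $d$ expected errors on the ghost sample into a bound $\SZB \le c\,d$ on the information itself. Everything you establish before that point (interpolation, the $d/(n+1)$ leave-one-out risk bound, expected Hamming weight at most $d$ of the loss pattern) is standard and correct, but it yields only the $O(d\log n)$ estimate of Steinke and Zakynthinou --- exactly the bound the conjecture asks to improve. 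A plan whose crux is marked ``main obstacle, unresolved'' is a research program, not a proof.

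Beyond the admitted gap, two steps as written are unsound. First, the reduction $\SZB = I(L;U\mid Z) = \eSZB$ requires the output hypothesis to be ``extended off the supersample by a rule that is measurable with respect to $Z$ alone.'' A learning algorithm observes only $\SS$, never $Z$: it cannot identify which points of $\XX$ belong to the supersample, let alone fix its predictions there as a function of $Z$. Without this inadmissible device, the global hypothesis $\Alg(\SS)$ may reveal far more about $U$ (given $Z$) than the loss pattern does --- this excess is exactly the gap between CMI and evaluated CMI, and it cannot be defined away; you only get the one-sided data-processing inequality $I(L;U \mid Z) \le \SZB$, which points the wrong way. Second, the chaining analogy is a type error: the chaining results invoked in the paper's remark bound the \emph{generalization error} by a multi-scale sum; they do not bound the \emph{CMI of any algorithm}, which is what the conjecture demands. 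Moreover, the proposed inequality $I(L;U\mid Z)\le \sum_k I(L_k;U\mid Z)$ is not a valid information inequality: the chain rule produces the terms $I(L_k;U\mid Z, L_1,\dots,L_{k-1})$, and conditioning can strictly increase mutual information (the standard XOR example already violates the unconditioned version), so even the skeleton of the decomposition is unsupported. What survives is a sensible identification of the right candidate algorithm (the OIG predictor, which, being improper, correctly evades the paper's negative result for proper learners); the conjecture itself remains open.
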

We leave the resolutions of these conjectures as open problems.
If either of these conjectures is false, it would demonstrate that we cannot completely characterize the expected generalization error of proper learning for VC classes in the realizable case.

\printbibliography
\end{document}